\newsavebox{\imagebox}
\theoremstyle{plain}
\newtheorem{theorem}{Theorem}[section]
\theoremstyle{definition}
\theoremstyle{remark}
\renewcommand{\vec}[1]{\bm{#1}}
\newcommand{\mat}[1]{\bm{#1}}
\DeclareMathOperator*{\argmin}{arg\,min}
\newcommand*{\defeq}{\mathrel{\vcenter{\baselineskip0.5ex\lineskiplimit0pt\hbox{\scriptsize.}\hbox{\scriptsize.}}}%
	=}
\newcommand{\re}{\bar e}  
\newcommand{\sA}{A}
\newcommand{\rc}{\bar c}  
\newcommand{\allP}{\mathcal P}
\newcommand{\sP}{P}
\newcommand{\sC}{C}
\newcommand{\QUBO}{QUBO\@\xspace}
\newcommand{\MAPF}{MAPF\@\xspace}
\newcommand{\ILP}{ILP\@\xspace}
\newcommand{\Half}{\textsc{Half}\@\xspace}
\newcommand{\Slack}{\textsc{Slack}\@\xspace}
\newcommand{\Conflict}{\textsc{Conflict}\@\xspace}
\newcommand{\function}[1]{\textsc{#1}}
\crefname{section}{Sec.}{Secs.}
\crefname{figure}{Fig.}{Figs.}
\crefname{equation}{}{}
\icmltitlerunning{Hybrid Quantum-Classical Multi-Agent Pathfinding}
\begin{document}

\twocolumn[
\icmltitle{Hybrid Quantum-Classical Multi-Agent Pathfinding}




\begin{icmlauthorlist}
\icmlauthor{Thore Gerlach}{bonn,iais}
\icmlauthor{Loong Kuan Lee}{iais}
\icmlauthor{Frederic Barbaresco}{thales}
\icmlauthor{Nico Piatkowski}{iais}
\end{icmlauthorlist}

\icmlaffiliation{bonn}{University of Bonn}
\icmlaffiliation{iais}{Fraunhofer IAIS}
\icmlaffiliation{thales}{Thales Land and Air Systems}

\icmlcorrespondingauthor{Thore Gerlach}{gerlach@iai.uni-bonn.de}

\icmlkeywords{QUBO, MAPF, Column Generation, Quantum Computing}

\vskip 0.3in
]



\printAffiliationsAndNotice{}  

\begin{abstract}
	Multi-Agent Path Finding~(\MAPF) focuses on determining conflict-free paths for multiple agents navigating through a shared space to reach specified goal locations.  This problem becomes computationally challenging, particularly when handling large numbers of agents, as frequently encountered in practical applications like coordinating autonomous vehicles. Quantum Computing~(QC) is a promising candidate in overcoming such limits. However, current quantum hardware is still in its infancy and thus limited in terms of computing power and error robustness. In this work, we present the first optimal hybrid quantum-classical \MAPF algorithms which are based on branch-and-cut-and-price. QC is integrated by iteratively solving \QUBO problems, based on conflict graphs. Experiments on actual quantum hardware and results on benchmark data suggest that our approach dominates previous \QUBO formulations and state-of-the-art \MAPF solvers.
\end{abstract}

\begin{figure*}[!t]
	\centering
	\includesvg[width=\textwidth]{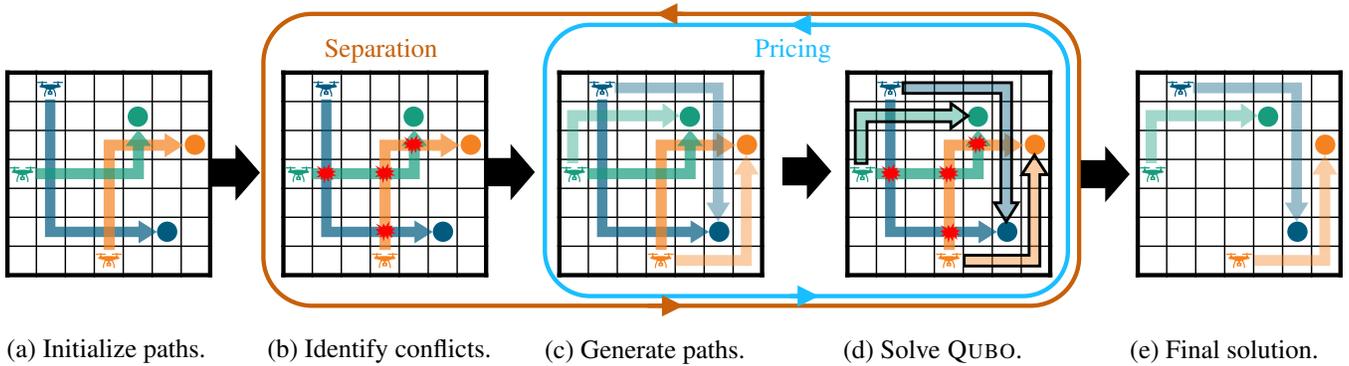}
	\caption{Schematic visualization of our quantum \MAPF algorithm. (a) First, initial paths are generated for every agent with possible conflicts. (b) We enter the outer loop (separation), where we identify conflicts between paths and add them to the problem. (c) In the pricing step, we generate new paths for every agent and (d) find the best set of paths by solving a \QUBO problem. This inner loop is repeated until adding a new path cannot improve the solution quality, while the outer loop is terminated when our set of chosen paths has no conflicts. (e) By construction, a conflict-free optimal set of paths is returned.}
	\label{fig:overview}
\end{figure*}

\section{Introduction}
\label{sec:introduction}

Emerging domains of large-scale resource allocation problems, such as assigning road capacity to vehicles, warehouse management or 3D airspace to \emph{Unmanned Aerial Vehicles}~(UAVs), often require \emph{Multi-Agent Pathfinding}~(\MAPF)~\cite{stern2019multi,choudhury2022coordinated,li2021lifelong} to determine feasible allocations.
\MAPF involves calculating non-colliding paths for a large number of agents simultaneously, presenting significant computational challenges in realistically sized scenarios. 
These challenges are becoming increasingly relevant in case of large-scale real-world applications. 
E.g., future UAV traffic in urban environments, driven by parcel delivery demands, is expected to involve managing thousands of flight paths~\cite{doole2020estimation}.

The scalability of optimal state-of-the-art \MAPF solvers is limited, since finding optimal solutions is NP-hard~\cite{sharon2012conflict}. 
Thus one often falls back to suboptimal or anytime methods~\cite{li2021anytime,li2022mapf,huang2022anytime,okumura2023lacam,okumura2023improving,li2021eecbs}. 
Even though such methods are computationally efficient in finding a feasible solution, the solution quality can be insufficient which implies the urge for optimal solution methods. 
By reformulating \MAPF as a multi-commodity flow problem, it can be solved optimally via \emph{Integer Linear Programming}~(\ILP). 
However, the reduction uses an inefficient representation of the problem setting in terms of space complexity and is only effective on small instances. 
More popular techniques for optimal \MAPF include \emph{Conflict-based Search}~(CBS)~\cite{sharon2012conflict} and \emph{Branch-and-Cut-and-Price}~(BCP)~\cite{lam2019branch}. 
CBS is a two-level procedure, with splitting a search tree based on detected conflicts between agents and subsequent replanning. 
This tree is explored with best-first search until a collision-free node is found. 
BCP takes a different approach of considering a (possibly infeasible) solution which is then refined iteratively by successively adding paths and constraints. 
Similarly to CBS, BCP is a two-level algorithm: on the low level it solves a series of single-agent pathfinding problems, while the high level uses \ILP to assign feasible paths to agents. 
While low level single agent pathfinding can be performed efficiently, the high level problems of both CBS and BCP remain NP-hard. 

\emph{Quantum Computing}~(QC) is considered promising for tackling certain NP-hard \emph{Combinatorial Optimization}~(CO) problems. 
This is due to its potential to leverage quantum phenomena to solve certain types of problems faster than classical methods. 
The quantum mechanical effect of \emph{superposition} enables exploration of a vast solution space in parallel, which is particularly advantageous for finding the best solution from a large set of possibilities. 
During this exploration quantum \emph{entanglement} allows the encoding of high-order relationships among problem variables, leading to an effective exploration of the solution space. 
In particular, QC is suited for solving \emph{Quadratic Unconstrained Binary Optimization}~(\QUBO) problems~\cite{punnen2022quadratic}
\begin{equation}\label{eq:qubo}
	\min_{\vec z\in\{0,1\}^n} \vec z^{\top}\mat Q\vec z\;,
\end{equation}  
where $\mat Q$ is an $n\times n$ real matrix and $n$ is the number of qubits.
Despite this simple problem structure, it is NP-hard and hence encompasses a wide range of real-world problems, such as chip design~\cite{gerlach2024fpga}, flight gate assignment~\cite{stollenwerk2019flight} and trajectory planning~\cite{stollenwerk2019quantum}.

One prominent QC method is \emph{Adiabatic Quantum Computing}~(AQC), which is grounded in the adiabatic theorem~\cite{albash2018adiabatic}. 
This theorem states that a quantum system will remain in its ground state if its Hamiltonian changes slowly enough and there is a sufficient energy gap from excited states. 
The problem is encoded into the final Hamiltonian, and the system is evolved gradually from an initial Hamiltonian. 
If done adiabatically, the system ends in the ground state of the problem Hamiltonian, which corresponds to the optimal solution.
Approximations for AQC can be obtained by either using digital QC hardware, such as the \emph{Quantum Approximate Optimization Algorithm}~\cite{farhi2014}, or by purposefully built analog devices, such as \emph{Quantum Annealers}~\cite{Johnson2011-QAW}.
Despite QC's promise, we find ourselves in the \emph{Noisy Intermediate-scale Quantum}~(NISQ)~\cite{preskill2018quantum} era.
QC devices face challenges including error rates, limited computing power and restricted hardware topology, which lead to suboptimal solutions obtained with currently available hardware.

In this paper, we investigate the use of hardware-aware QC for \MAPF by constructing two optimal hybrid quantum-classical algorithms based on \QUBO subproblems.
To the best of our knowledge, these are the first quantum algorithms for \MAPF.
We call our algorithms \emph{\QUBO-and-Price}~(QP) and \emph{\QUBO-and-Cut-and-Price}~(QCP) which are based on the idea of BCP.
We iteratively add paths (and constraints) to the problem and solve a \QUBO, which leads to the applicability of QC. 
This gives us an upper bound on the best possible solution and a stopping condition tells us when our set of paths contains the optimal solution.
An overview of QCP can be found in~\cref{fig:overview}.
Even though current quantum hardware is still limited, our framework is modular and designed to be compatible with future devices.

Our contributions can be summarized as follows:
\begin{itemize}
	\item Two novel optimal hybrid quantum-classical \MAPF algorithms (QP and QCP), which iteratively solve restricted \ILP problems by using \QUBO,
	\item An optimality criterion with proof, generalizing the pricing problem of classical column generation,
	\item Hardware-aware \QUBO formulations for the restricted \ILP problems with using the concept of conflict graphs, leading to parallel solvable independent subproblems,
	\item Extensive experiments on benchmark datasets which show the superiority of our method over previous \QUBO approaches on real quantum hardware and state-of-the-art \MAPF solvers.
\end{itemize}

\section{Related Work}
\label{sec:related_work}

Our algorithms are based on BCP~\cite{lam2019branch}.
Despite their optimality, such algorithms require a sophisticated branching strategy and are not guaranteed to find a good solution in reasonable time.
An anytime adaption of \MAPF BCP has been investigated~\cite{lam2023exact}, but further investigation of optimal anytime algorithms is of great interest~\cite{okumura2023improving}. 
Heuristics are used for avoiding exponentially many branching steps, leading to efficient suboptimal algorithms~\cite{sadykov2019primal}.
However, such rounding heuristics can lead to unsatisfying results, making the investigation of QC for this task intriguing.

QC is a promising candidate for large-scale planning problems~\cite{stollenwerk2019flight,li2024quantum}.
In the area of multi-agent problems, flow-problem formulations have been investigated~\cite{ali2024multicommodity,zhang2021qed,tarquini2024testing,davies2024quantum}.
These methods are edge-based, that is each edge in the spatio-temporal graph is represented by a qubit.
Even though certain constraints can be integrated into the quantum state in this framework, the problem size is way beyond current QC hardware capabilities and also infeasible for near-term devices.

Instead of representing all edges in the given graph, a different approach is taken in~\cite{martin2023unsplittable} by considering which path to choose as decision variables.
This leads to introducing a \QUBO formulation for the \emph{Unsplittable Multi-Commodity Flow} problem by directly integrating the inequality constraints.
Similar to BCP, they iteratively add paths to their problem.
However, they present no theoretically sound criterion on when to stop, but have to rely on suboptimal heuristics.
Furthermore, the large amount of constraints have to be incorporated into the \QUBO formulation which either leads to a huge number of auxiliary variables or the need for an iterative optimization scheme to adapt Lagrangian parameters~\cite{mucke2023efficient}. 
The authors of~\cite{stollenwerk2019quantum,huang2022variational} circumvent this problem by using conflict graphs for representing possible constraints.
However, their methods are not directly applicable to the \MAPF setting, since only the starting times of preplanned trajectories are optimized.

We combine the ideas of an iteratively expanding path-based approach with the concept of conflict graphs.
A pricing criterion tells us when all variables are included which are part of an optimal solution.
The proof is based on~\cite{ronnberg2019integer}, which however assumes negativity on reduced costs.
We loosen this assumption and adapt it for general applicability to \MAPF.

\section{Background}
\label{sec:background}

For notational convenience, we onforth denote matrices as bold capital letters (e.g. $\mat A$), vectors as bold lowercase letters ($\vec{a}$) and define $\mathbb B\defeq \{0,1\}$. 
Furthermore, let $\vec 1$ denote the vector consisting only of ones, with the dimension following from context. 
Lastly, let $\preceq$ denote entry-wise inequality between vectors, i.e., $\vec a\preceq \vec b\Leftrightarrow a_i\le b_i \ \forall i$.
The following sections will formalize \MAPF (\cref{sec:mapf}) and give a mathematical formulation of how to solve it (\cref{,sec:path_based}).

\subsection{Multi-Agent Pathfinding}
\label{sec:mapf}

The input to the \MAPF problem is a set of agents $\sA$, a weighted undirected graph $G=(V,E)$ and an origin-destination pair for each agent.
The graph $G$ captures the underlying environment, where all possible agent states (e.g. location) are represented by $V$ and $E$ can be regarded as valid transitions from one state to another, with an underlying cost.
$V$ already captures environmental constraints, such as possible obstacles, while $E$ captures motion constraints of the agents, e.g., velocity and maximum turning rates of UAVs.
The goal of \MAPF is now to find optimal paths from the origin to the destination for each agent, s.t. they avoid pairwise conflicts.
For rating optimality, we use the objective of the sum of all weighted path lengths.
We here consider classical collision conflicts, that is the \emph{vertex conflict} and the \emph{(swapping) edge conflict}.
A vertex conflict between two agents exists if they move to the same node at the same time, while an edge conflict prohibits two agents to use the same edge at the same time. 
Introducing the time component, we allow the agents to start at different points in time, while also giving them the opportunity to wait at a certain location.

For finding a solution to the \MAPF problem, a spatio-temporal directed graph formulation is typically used.
That is, we define $G_T=(V_T,E_T)$ as the graph, with nodes $v=(s,t)\in V\times \mathcal \{1,\dots,T\}$ and edges $e=(v,v')=((s,t),(s',t+1))\in V_T\times V_T$ with weights $w_e=w_{(s,s')}$, where $(s,s')\in E$ and $T\in\mathbb N$ is a maximum allowed time horizon.
We define the reverse edge of $e=(s,t),(s',t+1)$ as $\re\defeq(s',t),(s,t+1)$ for representing edge conflicts.
$G_T$ is an acyclic weighted directed graph with $\left|V_T\right|=T\left|V\right|$ and $\left|E_T\right|=\mathcal O\left(ST\left|V\right|\right)$, where $S$ is the average number of possible state transitions.
For example, if we consider the classical two-dimensional grid environment, an agent has five possible state transitions, i.e., wait, go north, go east, go south and go west.

Every agent $a\in \sA$ is obliged to find a path in $G_T$ from origin $(o_a,t_a)\in V_T$ to destination $(d_a,t_a+T_a)\in V_T$ for a starting time $t_a\in[T]$ and $T_a\le T-t_a$. 
A path $p$ is a sequence of edges $p\defeq(e_1,\dots, e_{T_a-1})$, s.t. $e_{t}=((x,t_a+t),(y,t_a+t+1))$, $e_{t+1}=((y,t_a+t+1),(z,t_a+t+2))$, $e_1=((o_a,t_a),(s,t_a+1))$ and $e_{T_a}=((s,t_a+T_a-1),(d_a,t_a+T_a))$.
The cost/length $c_p$ of a path $p$ is defined as the sum of all its edge weights, i.e., $c_p\defeq \sum_{e\in p}w_e$.
Specifically, we go over to the mathematical description of the \MAPF objective.

\subsection{Path-Based Formulation}
\label{sec:path_based}

We encode each possible path $p\in\mathcal P$ for every agent by $z_p\in\mathbb B$.
An \ILP formulation is given by:
\begin{subequations}
	\label{eq:ilp_path}
	\begin{align}
		\text{MP}:\min_{\vec{z}\in\mathbb B^N}&\ \sum_{a\in A}\sum_{p\in \allP_a}c_pz_{p}
		\label{eq:ilp_path:cost} \\
		\text{s.t.}&\ \sum_{p\in \allP_a}z_{p}=1,\ \forall a\in A
		\label{eq:ilp_path:one_hot} \\
		&\ \sum_{a\in A}\sum_{p\in \allP_a}x_{v}^pz_{p}\le 1,\ \forall i\in V_T
		\label{eq:ilp_path:vertex_conflict} \\
		&\sum_{a\in A}\sum_{p\in \allP_a}\left(y_{e}^{p}+y_{\re}^{p}\right)z_p\le 1,\ \forall e\in E_T\;,
		\label{eq:ilp_path:edge_conflict}
	\end{align}
\end{subequations}
where $x_{v}^p,y_{e}^p\in\mathbb B$ indicate whether path $p$ visits vertex $v\in V_T$ / edge $e\in E_T$, $\allP_a$ indicates the set of all possible paths for agent $a$ and $N=|\allP|$, $\allP=\bigcup_{a\in A}|\allP_a|$.
Note that $x_{v}^p,y_{e}^p$ are constant and we just optimize over $z_p$.
The constraint in~\cref{eq:ilp_path:one_hot} ensures that exactly one path is chosen for every agent, while~\cref{eq:ilp_path:vertex_conflict,eq:ilp_path:edge_conflict} avoid conflicts.
We denote~\cref{eq:ilp_path} as the \emph{Master Problem}~(MP).

We have $|V_T|$ constraints for avoiding vertex conflicts and $|E_T|$ constraints for avoiding swapping conflicts.
The number of our decision variables representing possible solutions now corresponds to the number of all possible paths for all agents, which is exponential in the maximum time horizon, $N=\mathcal O(|A|S^T)$.
A large maximum time $T$ or number of agents $|A|$ can make the problem size increase very quickly, making it infeasible to solve it with current NISQ devices.

\section{Methodology}
\label{sec:methodology}

Not only the huge number of decision variables poses a challenge for current quantum computers, but also the number of constraints strongly impacts the solvability of the resulting \QUBO formulation.
We overcome this issue by considering the \emph{Restricted Master Problem}~(RMP) which optimizes over a subset of decision variables
\begin{subequations}
	\label{eq:rmaster}
	\begin{align}
		\text{RMP}:\min_{\vec{z}\in Z}&\ \vec{c}^{\top}\vec{z} \\
		\text{s.t.}&\ \mat{D}\vec{z}\preceq\vec{1} \label{eq:rmaster:conflicts}\;,
	\end{align}
\end{subequations}
with $Z\defeq \{\vec z\in \mathbb B^n: \sum_{p\in \sP_a}z_{p}=1,\ \forall a\in A\}$, $\sP_a\subset \allP_a$, $n=|\sP|$, $\sP=\bigcup_{a\in A}\sP_a$ and $\vec{c}\in\mathbb R^n$ is the vector consisting of the corresponding path lengths. 
The constraint matrix $\mat D$ captures inequality constraints from~\cref{eq:ilp_path}:
\begin{align*}
	D_{i,p}=\begin{cases}
		x_i^p, &\text{if }i\in V_T\;, \\
		y_i^p + y_{\bar i}^p, &\text{if }i\in E_T\;.
	\end{cases}
\end{align*}
To get equivalence between~\cref{eq:rmaster} and~\cref{eq:ilp_path}, we use a two-loop iterative optimization scheme:
the outer loop decides which constraints are not fulfilled and should be added to our problem and the inner loop optimizes over subsets of all possible paths for every agent, increasing the number of paths in every step (see~\cref{fig:overview} and~\cref{alg:algorithm}).
This procedure builds with the hope that we already find a (nearly) optimal solution with not exploring the whole search space, both in terms of decision variables $n$ and number of constraints $m$.
The mathematical framework describing this technique is called \emph{Column/Row Generation}~\cite{lubbecke2010column}.
The columns are identified with the decision variables and the rows with the constraints.
It alternatingly solves a high-level (RMP) and low-level \emph{Pricing} and \emph{Separation} problems (PP and SP), which decide which variables/constraints to add to the MP.
If solving the PP/SP tells us to not add any more variables, the solution to the MP is optimal by construction.

This method was developed for \emph{Linear Programming}~(LP)~\cite{dantzig2002linear} without the restriction of integrality of the variables---in our case we do not optimize over the continuous domain $[0,1]$ but over the binary values $\{0,1\}$.
To cope with these kind of \ILP problems, \emph{branching} methods, such as BCP~\cite{desrosiers2011branch}, are used.
\begin{algorithm}[!t]
	\small
	\caption{\function{QuboAndPriceAndCut}}\label{alg:algorithm}
	\begin{algorithmic}[1]
		\Require Shortest independent paths $\sP$ and $\vec{z}=\vec 1$
		\Ensure Optimal feasible solution
		\State $C\gets \emptyset$
		\While{$\vec{z}$ is infeasible}  
		\Comment{Separation}
		\State Add violated constraints to $C$
		\While{not~\cref{eq:optimality}}  
		\Comment{Pricing}
		\State $p^*\gets\argmin_p\rc_p(\vec{\lambda},\sC)$
		\label{alg:algorithm:sp}
		\Comment{Shortest path}
		\State $\sP\gets \sP\cup \{p^*\}$
		\State $\vec{\lambda}\gets \function{OptimizeLagrangian}(\sP,\sC)$
		\label{alg:algorithm:ol}
		\Comment{Solve~\cref{eq:lagrangian_dual}}
		\State $\vec{z}\gets \function{OptimizeMaster}(\sP,\sC)$
		\label{alg:algorithm:om}
		\Comment{\QUBO (\cref{sec:upper_bound})}
		\EndWhile
		\EndWhile
	\end{algorithmic}
\end{algorithm}
In the worst case, exponentially many branching steps are needed. 
We circumvent this problem by considering solutions to the \ILP MP, instead of the relaxed LP version.
Ideally, we want $n\ll N$, while an optimum of RMP is also an optimum of MP.
The next section governs how to add new paths to our problem and check whether a solution is equivalent.

\subsection{Pricing}
\label{sec:pricing}

Instead of using a relaxed LP formulation for the PP, we use \emph{Lagrangian Relaxation}~(LR)~\cite{wolsey2020integer}, which can be used for \ILP problems of the form in~\cref{eq:rmaster}.
The \emph{partial Lagrangian} is given by $\mathcal L(\vec{z},\vec{\lambda})\defeq \vec{c}^{\top}\vec{z}+\vec{\lambda}^{\top}\left(\mat{D}\vec{z}-\vec{1}\right)$, $
\vec{\lambda}\in\mathbb R_+^m$ and the LR of the RMP is defined as 
\begin{align}\label{eq:lagrangian_relaxation}
	\text{LR}:\mathcal L(\vec{\lambda})\defeq\min_{\vec{z}\in Z}&\ \mathcal L(\vec{z},\vec{\lambda})=\min_{\vec{z}\in Z}\bar{\vec c}(\vec{\lambda})^{\top}\vec z-\vec{\lambda}^{\top}\vec{1}\;,
\end{align}
where $\bar{ \vec c}(\vec{\lambda})\defeq \vec c + \vec{\lambda}^{\top}\mat{D}$ is the \emph{Lagrangian reduced cost} vector.
Note that $\mathcal L(\vec{\lambda}) \le v(\text{RMP})$, where $v(\cdot)$ indicates the optimal value of the optimization problem.
\begin{theorem}\label{theorem:optimality}
	Let $\hat v$ be the objective value of a feasible solution $\bar{\vec{z}}$ to \emph{RMP} ($\mat D\bar{\vec{z}}\preceq \vec 1$, $\bar{\vec{z}}\in Z$ and $\hat v=\vec{c}^{\top}\bar{\vec{z}}$) and $\vec{\lambda}\in\mathbb R_+^m$.
	If $v(\emph{RMP})\neq v(\emph{MP})$ then
	\begin{align}
		\exists a \in A:\ \min_{p\in \allP_{a}\setminus \sP_{a}}\bar c_p(\vec{\lambda})-\min_{p\in \sP_a}\bar c_p(\vec{\lambda})< \hat v - \mathcal L(\vec{\lambda})\;.
		\label{eq:optimality}
	\end{align}
\end{theorem}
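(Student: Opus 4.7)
The plan is to prove the contrapositive: assume that for every agent $a\in A$ one has $\min_{p\in\allP_a\setminus\sP_a}\bar c_p(\vec\lambda)-\min_{p\in\sP_a}\bar c_p(\vec\lambda)\ge \hat v-\mathcal L(\vec\lambda)$, and deduce that $v(\text{MP})=v(\text{RMP})$. Note that the ``easy'' direction $v(\text{MP})\le v(\text{RMP})$ is free, since any RMP-feasible $\vec z$ extends by zeros to an MP-feasible vector of the same cost. So the entire job is to show $v(\text{MP})\ge v(\text{RMP})$ under the assumption.

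The key device will be to compare the partial Lagrangian of MP (using the same multipliers $\vec\lambda$) to $\mathcal L(\vec\lambda)$. Let $\vec z^*$ be an optimal MP-solution selecting path $p_a^*\in\allP_a$ for each $a$. Weak duality and $\vec\lambda\succeq\vec 0$ together with $\mat D^{\text{MP}}\vec z^*\preceq\vec 1$ give
\begin{equation*}
v(\text{MP})=\vec c^\top\vec z^*\;\ge\;\bar{\vec c}(\vec\lambda)^\top\vec z^*-\vec\lambda^\top\vec 1=\sum_{a\in A}\bar c_{p_a^*}(\vec\lambda)-\vec\lambda^\top\vec 1.
\end{equation*}
Now I split the sum according to the set $S\defeq\{a\in A: p_a^*\notin\sP_a\}$. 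For $a\notin S$ the bound $\bar c_{p_a^*}(\vec\lambda)\ge\min_{p\in\sP_a}\bar c_p(\vec\lambda)$ is immediate, while for $a\in S$ the contrapositive assumption yields $\bar c_{p_a^*}(\vec\lambda)\ge\min_{p\in\allP_a\setminus\sP_a}\bar c_p(\vec\lambda)\ge\min_{p\in\sP_a}\bar c_p(\vec\lambda)+(\hat v-\mathcal L(\vec\lambda))$. Recognising $\sum_a\min_{p\in\sP_a}\bar c_p(\vec\lambda)-\vec\lambda^\top\vec 1=\mathcal L(\vec\lambda)$ gives the clean inequality
\begin{equation*}
v(\text{MP})\;\ge\;\mathcal L(\vec\lambda)+|S|\bigl(\hat v-\mathcal L(\vec\lambda)\bigr).
\end{equation*}

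From here the proof splits into two cases. If $|S|\ge 1$ the right-hand side is at least $\hat v\ge v(\text{RMP})$, which with the opposite inequality closes the chain. If $|S|=0$, every chosen path $p_a^*$ already lies in $\sP_a$, so $\vec z^*$ (restricted to coordinates in $\sP$) is RMP-feasible and certifies $v(\text{RMP})\le v(\text{MP})$ directly. Either way $v(\text{MP})=v(\text{RMP})$, establishing the contrapositive. The main obstacle I foresee is not a technical one but a bookkeeping subtlety: the quantity $\hat v-\mathcal L(\vec\lambda)$ is the weak-duality gap at the current multipliers and can be zero, in which case the case split on $|S|$ is doing the real work instead of the reduced-cost bound; being careful that the argument still goes through when $\hat v=\mathcal L(\vec\lambda)$ (so that the extra slack is trivially non-negative and the $|S|=0$ branch is what closes the proof) is the one place where the adaptation from Rönnberg--Larsson-style arguments, which assume strict negativity of reduced costs, really matters.
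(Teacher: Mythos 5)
Your proof is correct and follows essentially the same route as the paper's: both argue the contrapositive by lower-bounding $v(\text{MP})$ through the partial Lagrangian at the given $\vec{\lambda}$, decomposing the minimum agent-by-agent over the one-hot structure, and invoking the non-violated inequality to lift the bound to $\hat v \ge v(\text{RMP})$. Your bookkeeping via the set $S$ of agents whose optimal path is unpriced is in fact slightly tidier than the paper's version (which fixes one agent $a$ at a time and silently replaces $\min_{p\in \allP_b}\bar c_p(\vec{\lambda})$ by $\min_{p\in \sP_b}\bar c_p(\vec{\lambda})$ for $b\neq a$, a step that itself needs the non-violation assumption together with $\hat v\ge\mathcal L(\vec{\lambda})$), and your explicit $|S|=0$ case covers the same ground as the paper's closing remark that $v(\text{MP})\le v(\text{RMP})$ always holds.
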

\begin{proof}
	We give a proof by showing that $v(\text{RMP})= v(\text{MP})$ holds if~\cref{eq:optimality} does not hold.
	The path variables not in RMP are implicitly assumed to take the value $0$. 
	Assume now, that one variable not included in the RMP takes value $1$ for agent $a\in A$, i.e., $\sum_{p\in \allP_a\setminus \sP_a}z_p\ge1$.
	It follows that
	\begin{subequations}
		\label{eq:proof}
		\begin{align}
			v(\text{MP})
			=&\min_{\vec{z}\in\mathcal Z}\left\lbrace \sum_{p\in \allP}c_pz_p: \mat D\vec{z}\preceq \vec{1},\ \sum_{p\in \bar P_a}z_p\ge1\right\rbrace \\
			\ge&\min_{\vec{z}\in\mathcal Z}\left\lbrace\mathcal L(\vec{z},\vec{\lambda}): \sum_{p\in \bar P_a}z_p\ge1\right\rbrace \\
			=&\min_{\vec{z}\in \mathcal Z}\left\lbrace\sum_{p\in \allP}\bar{c}_p(\vec{\lambda})z_p: \sum_{p\in\bar P_a}z_p\ge1\right\rbrace-C_{\vec{\lambda}} \\
			=&\min_{\vec{z}\in \mathcal Z}\sum_{p\in \allP\setminus \mathcal P_a}\bar{c}_p(\vec{\lambda})z_p+\min_{p\in \bar P_a}\bar c_p(\vec{\lambda})-C_{\vec{\lambda}} \\
			=&\sum_{b\in A\setminus \{a\}}\min_{p\in \sP_{b}}\bar{c}_p(\vec{\lambda})+\min_{p\in \bar P_a}\bar c_p(\vec{\lambda})-C_{\vec{\lambda}}
			\label{eq:apply_smaller_reduced_cost} \\
			=&\min_{p\in \bar P_a}\bar c_p(\vec{\lambda})-\min_{p\in \sP_{a}}\bar{c}_p(\vec{\lambda})+\mathcal L(\vec{\lambda}) \\
			\ge& \hat v\ge v(\text{RMP})
			\label{eq:apply_optimality}
			\;,
		\end{align}
	\end{subequations}
	with $\bar P_a=\mathcal \allP_a\setminus \sP_a$, $C_{\vec{\lambda}}=\vec{\lambda}^{\top}\vec 1$ and $\mathcal Z= \{\vec z\in \mathbb B^N: \sum_{p\in \allP_a}z_{p}=1,\ \forall a\in A\}$.
	\cref{eq:apply_optimality} holds since we assume that~\cref{eq:optimality} does not hold.
	We follow that no feasible solution to MP with $\sum_{p\in \allP_a\setminus \sP_a}z_p\ge1$ can be better than an optimal solution to RMP for each $a\in A$.
	As all paths not included in the RMP are in the set $\bigcup_{a\in A}\allP_a\setminus \sP_a$ and $v(\text{MP})\le v(\text{RMP})$ is always true, we conclude $v(\text{MP})= v(\text{RMP})$. 
	Hence, if $v(\text{MP})\neq v(\text{RMP})$ then~\cref{eq:optimality} holds.
\end{proof}
This theorem leads to an optimality criterion, telling us when the variables in the RMP contain the ones for an optimal solution to the MP.
The pricing problem~(PP) aims to find a path not included in the RMP with optimal reduced costs
\begin{align}
	\text{PP}:\min_{p\in \allP_a\setminus \sP_a}\bar c_p(\vec{\lambda})
	\label{eq:pricing}, \ \forall a\in A\;.
\end{align} 
Even though $\allP_a\setminus \sP_a$ can be exponentially large,~\cref{eq:pricing} can be solved efficiently.
It boils down to a \emph{$k$ shortest path problem} ($k\le|\sP_a|$) on the graph $G_T$ with updated edge weights---$\lambda_e$ is added to the cost of $w_e$ and $w_{\re}$ for conflicted edges and $\lambda_v$ is added to all incoming edges to conflicted vertices in the current solution $\bar{\vec z}$.
This can be done efficiently, e.g. using \emph{Yen's algorithm}~\cite{yen1971finding}, which dynamically updates the new shortest path using the already computed paths, obtained with A*.

Interestingly,~\cref{eq:optimality} generalizes the stopping criterion in classical column generation.
The RHS is an upper bound on the optimality gap between the RMP and LD, i.e., $v(\text{RMP})-v(\text{LD})\le \hat v-\mathcal L(\vec{\lambda})$.
If that gap is $0$, we exactly recover the criterion given in~\cite{lam2019branch} given by $\bar c_p -\alpha_a< 0$, where $\alpha_a=\min_{p\in \sP_a}\bar c_p(\vec{\lambda})$ is the dual variable corresponding to the convexity constraint of agent $a$~\cref{eq:ilp_path:one_hot}.
In typical column generation, the pricing is solved with an optimal dual solution of the LP relaxation of the RMP~\cite{lam2019branch}. 
In our case, we solve the \emph{Lagrangian dual}~(LD)
\begin{equation}\label{eq:lagrangian_dual}
	\text{LD}:\ \max_{\vec{\lambda}\in\mathbb R_+^m}\mathcal L(\vec{\lambda})=\max_{\vec{\lambda}\in\mathbb R_+^m}\min_{\vec{z}\in Z}\mathcal L(\vec z,\vec{\lambda})\;.
\end{equation}
Due to standard ILP theory~\cite{geoffrion2009lagrangean}, the optimum of LD and the LP relaxation of RMP coincides, since $Z$ is convex.
Also, the dual parameters of this LP relaxation exactly correspond to the optimal $\vec{\lambda}^*$.
Instead of relying on optimal Lagrange parameters, our theorem allows any choice of  $\vec{\lambda}\in\mathbb R_+^m$.

\subsection{Separation}
\label{sec:separation}

In addition to adding new paths to our RMP, we can take a similar approach with handling the constraints.
Starting off with no inequality constraints~(\cref{eq:ilp_path:vertex_conflict,eq:ilp_path:edge_conflict}), we can iteratively add them to the RMP, reducing computational overhead.
Given a feasible solution $\bar{\vec{z}}$ to the RMP, we check whether it is also feasible for the MP. 
That is, we add the row/constraint $\mat D_{i}\bar{\vec{z}}\le 1$ to the RMP if $\mat D_{i}\bar{\vec{z}}>1, \ \forall i\in V_T\cup E_T.$
However, to ensure that our algorithm is optimal, we need to find an optimal solution w.r.t. the current constraints.
If this cannot be guaranteed,~\cref{alg:algorithm} can diverge to a suboptimal solution.
The performance of this procedure is evaluated in~\cref{sec:experiments}.

\subsection{Solving RMP with QUBO}
\label{sec:upper_bound}

It remains to clarify how to obtain a solution of~\cref{eq:rmaster}.
In fact, solving the RMP (\cref{alg:algorithm}, Line 8) is the main computational bottleneck of our developed algorithm, since finding shortest paths (\cref{alg:algorithm}, Line 5) and solving an LP problem (\cref{alg:algorithm}, Line 7) can be done efficiently.
Since we want the solutions to be solvable with QC, we reformulate the constrained problems into \QUBO formulations.
We have a look at three different approaches, which all rely on using penalty factors to integrate constraints.
The one-hot constraint given in $Z$~\cref{eq:ilp_path:one_hot} can be integrated by using
\begin{equation*}
	\min_{\vec z\in Z}\ \vec{c}^{\top}\vec{z}\Leftrightarrow
	\min_{\vec z\in \mathbb B^n}\ \vec{c}^{\top}\vec{z}+\sum_{a\in A}\omega_o^a\left\|\vec{1}^{\top}\vec{z}-1\right\|^2\;,
\end{equation*}
for large enough $\omega_o^a>0$, which leads to a \QUBO formulation~\cref{eq:qubo}.
Setting $\omega_o^a>\max_{p\in \sP_{a}}c_p$ always guarantees the equivalence.
It remains to incorporate the inequality constraints.

\paragraph{Slack Variables}
\label{sec:slack_variables}

The first approach inserts slack variables for every inequality constraint, similar to~\cite{davies2024quantum}.
This is often pursued in practice but can lead to large number of variables, especially when the number of constraints is large.
The linear inequality constraint in~\cref{eq:rmaster} can be reformulated with using an auxiliary vector $\vec{s}\in\mathbb B^n$ with binary entries, since $\vec{D}\vec{z}\preceq \vec{1}\ \Leftrightarrow \vec{D}\vec{z}-\vec{1}+\vec{s}=\vec 0.$
This leads to the slightly different but equivalent problem $\min_{\vec z\in Z}\vec{c}^{\top}\vec{z}$, s.t. $\vec{D}\vec{z}-\vec{1}+\vec{s}= \vec 0$.
This constrained problem can then be reformulated to an equivalent unconstrained problem by introducing a penalty parameter $\omega_s>0$
\begin{equation}\label{eq:slack_qubo}
	\min_{\vec z\in Z,\mat{s}\in\mathbb B^{m}}\quad \vec{c}^{\top}\vec{z}+\omega_s \left\|\mat{D}\vec{z}-\vec{1}+\vec{s}\right\|^2\;.
\end{equation}
Using this formulation, however, comes with the overhead of introducing $m$ additional binary variables. 
One variable is needed for every inequality constraint, leading to a total \QUBO dimension of $n+m$.
As we are still in the NISQ era, it is better to resort to a \QUBO formulation that uses fewer qubits.
However, we have a look at the performance of real quantum hardware for this \QUBO formulation in~\cref{sec:experiments}.

\paragraph{Without Slack Variables}
\label{sec:without_slack_variables}

Since $\mat{D}\in\mathbb B^{m\times n}$, we can avoid using slack variables by using the equivalence
\begin{equation}\label{eq:half_qubo}
	\min_{\vec z\in Z,\mat{s}\in\mathbb B^{m}}\left\|\mat{D}\vec{z}-\vec{1}+\vec{s}\right\|
	\Leftrightarrow \min_{\vec z\in Z}\left\|\mat{D}\vec{z}-\frac{1}{2}\vec{1}\right\|\;,
\end{equation}
since $\min_{n\in\mathbb N,s\in\mathbb B}(n-1+s)^2=\min_{n\in\mathbb N}(n-\frac12)^2-\frac14$.
Thus, we do not optimize over any slack variables and reduce the corresponding \QUBO size.

\paragraph{Conflict Graph}
\label{sec:conflict_graph}

As a third concept, we have a look into \emph{Conflict Graphs}~(CG).
The vertices of such a graph correspond to all paths considered in the problem and the edges indicate whether there is at least one conflict between a pair of paths.
An examplary schematic visualization of a conflict graph is given in~\cref{fig:conflict_graph}.
\begin{figure}[!t]
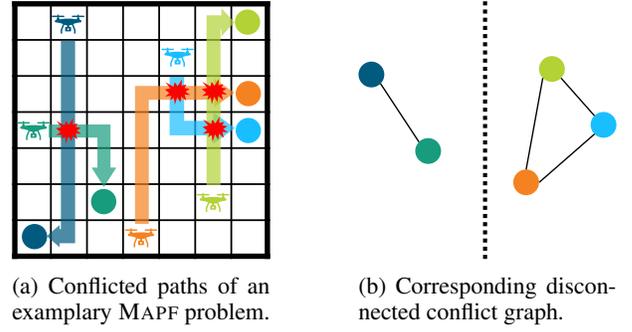

	\centering
	\begin{subfigure}{0.45\columnwidth}
		\centering
		\includesvg[width=1.0\columnwidth]{img/conflicted_paths}
		\caption{Conflicted paths of examplary \MAPF problem.}
	\end{subfigure}
	\hspace*{0.5cm}
	\begin{subfigure}{0.45\columnwidth}
		\centering
		\includesvg[width=1.0\columnwidth]{img/conflict_graph}
		\caption{Corresponding disconnected conflict graph.}
	\end{subfigure}
	\caption{Schematic visualization of a conflict graph, which has two connected components. This leads to the decomposition into independent subproblems with reduced problem size.}
	\label{fig:conflict_graph}
\end{figure}
We denote the adjacency matrix of CG graph as $\mat{C}$.
The RMP in~\cref{eq:rmaster} is equivalent to 
\begin{equation}\label{eq:conflict_qubo}
	\min_{\vec{z}\in Z}\ \vec{c}^{\top}\vec{z}+\omega_c\vec{z}^{\top}\mat{C}\vec{z}
\end{equation}
where $\omega_c\in\mathbb R_+$ penalizes potential conflicts.
Note that it is square and the dimension is only dependent on the number of considered paths and not on the number of constraints.
\begin{figure*}[!t]
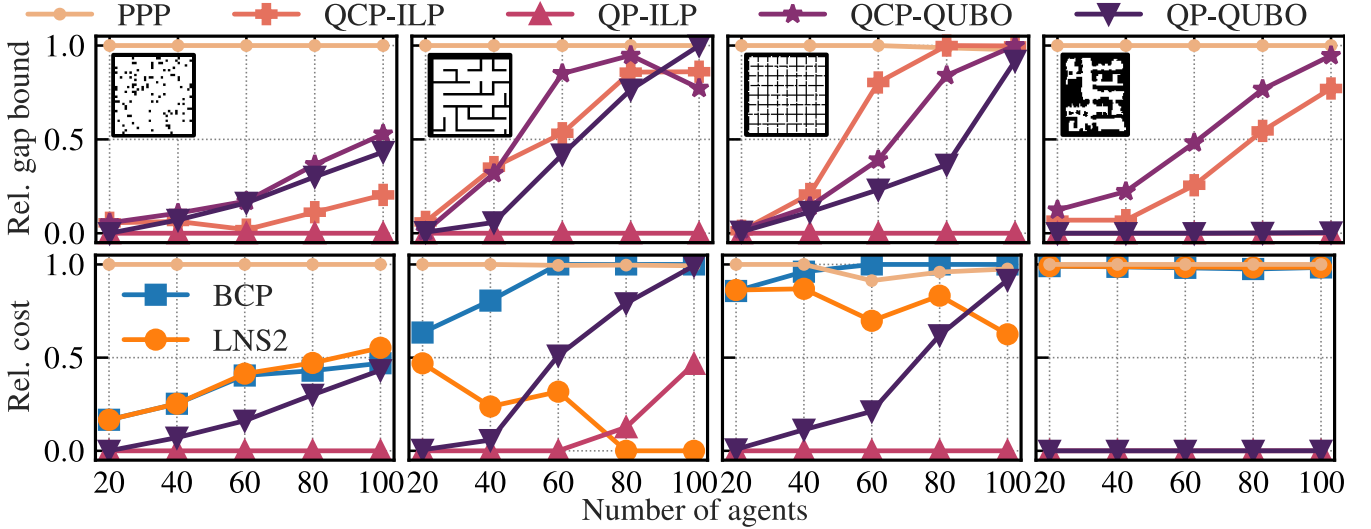

	\centering
	\begin{subfigure}{1.0\textwidth}
		\includesvg[width=\textwidth]{img/qcp/gap.svg}
	\end{subfigure}
	\begin{subfigure}{1.0\textwidth}
		\includesvg[width=\textwidth]{img/qcp/val_suboptimal_baselines.svg}
	\end{subfigure}
	\begin{subfigure}{1.0\textwidth}
		\includesvg[width=\textwidth]{img/qcp/val_anytime_baselines.svg}
	\end{subfigure}
	\caption{Relative performance comparison of our methods QP-\ILP, QP-\QUBO, QCP-\ILP and QCP-\QUBO to the baselines PPP, BCP, EECBS, LNS2 and LaCAM* on four different maps with a varying number of agents. The relative upper bound of the optimality gap (top) is shown along with the relative total path costs (middle and bottom) averaged over all $25$ scenarios. The lower the better, i.e., a value of $0$ corresponds to the best performance, while $1$ corresponds to the worst performing algorithm.}
	\label{fig:comparison}
\end{figure*}
Since the structure of the CG depends on the underlying problem, it may contain several connected components.
Thus, the graph of the \QUBO matrix can have multiple connected components, dependent on $\mat{C}$.
These connected components can be seen as smaller instances, giving us the ability to solve them independently.
This can lead to large reduction of the considered problem sizes, which is very vital for current QCs, due to the limited number of available qubits.
Furthermore, a lower density and well-behaved problem structure can have a huge effect on current quantum hardware, as we will see in~\cref{sec:experiments}.

This leads to two different hybrid quantum-classical \MAPF algorithms.
\QUBO-and-Price~(QP) describes generating new paths and stopping when~\cref{eq:optimality} is violated (inner loop in~\cref{fig:overview}). 
Even though the \QUBO solver might not be optimal, this procedure leads to generating all sufficient paths for obtaining an optimal solution.
We denote the extension of iteratively adding constraints to our problem as \QUBO-and-Cut-and-Price~(QCP).
It is only guaranteed to generate an optimal path set if the \QUBO is solved to optimality in every step. 
However, we also investigate its suboptimal performance in the next section.

\section{Experimental Evaluation}
\label{sec:experiments}

We compare our algorithms QP and QCP with four state-of-the-art \MAPF solvers: BCP~\cite{lam2019branch}, EECBS~\cite{li2021eecbs}, LaCAM*~\cite{okumura2023improving} and LNS2~\cite{li2022mapf}.
BCP is an optimal algorithm but has also been adapted to anytime, while EECBS is suboptimal.
LaCAM* and LNS2 are anytime algorithms, which can quickly obtain a good feasible solution.
For more details on these methods, we refer to the original papers.
The code was taken from their respective public repositories\footnote{\href{https://github.com/ed-lam/bcp-mapf}{https://github.com/ed-lam/bcp-mapf}}\footnote{\href{https://github.com/Jiaoyang-Li/EECBS}{https://github.com/Jiaoyang-Li/EECBS}}\footnote{\href{https://github.com/Kei18/lacam3}{https://github.com/Kei18/lacam3}}\footnote{\href{https://github.com/Jiaoyang-Li/MAPF-LNS2}{https://github.com/Jiaoyang-Li/MAPF-LNS2}}.
We kept all standard parameters and set a maximum time limit of $180\,s$ for all solvers.
The experiments were conducted on a single core of the type Intel(R) Xeon(R) Silver 4216 CPU @ 2.10GHz.

For QP and QCP we generate the initial paths with \emph{Prioritized Path Planning}~(PPP)~\cite{ma2019searching}.
That is, every agent is planned successively, with removing edges and nodes of previously planned paths. 
No clever priorization heuristic is followed, we randomly sample which agent to be chosen next. 
We use a maximum limit of $30$ pricing steps in total and compute optimal Lagrangian parameters $\vec{\lambda}^*$ with LP in every iteration.
We compare the optimal solution of the RMP obtained by an ILP Brach-and-Bound method with the solutions obtained by two different \QUBO solvers.
As a classical baseline, we use the \emph{Simulated Annealing}~(SA) implementation of D-Wave~\cite{dwave_ocean_sdk} and for real quantum hardware, we run experiments on a D-Wave Advantage\_system5.4~\cite{dwave_advantage_update} quantum annealer~(QA).
Since both solvers are probabilistic, we generate $1000$ samples each and use $1000$ Monte Carlo sweeps for SA and an annealing time of $20\,\mu s$ for QA. 
The remaining remaining unspecified parameters are set to their default values.
We compare the three different \QUBO formulations from~\cref{sec:upper_bound} and denote them by \Slack~\cref{eq:slack_qubo}, \Half~\cref{eq:half_qubo} and \Conflict~\cref{eq:conflict_qubo}.
These \QUBO formulations are dependent on penalty parameters, and understanding their impact is crucial for NISQ devices.
Large penalty weights can increase the dynamic range of \QUBO coefficients~\cite{mucke2025optimum, gerlach2024dynamic} and decrease the spectral gap~\cite{stollenwerk2019flight, stollenwerk2019quantum}, reducing solution quality and requiring longer annealing times.
In our experiments, we set these penalties to the values described in~\cref{sec:conflict_graph} for adhering the constraints and leave special tuning for future work.

As maps and instances, we use the well-known MovingAI benchmark~\cite{stern2019multi}. 
This benchmark includes $33$ maps and $25$ random scenarios, some of which were utilized in our experiments. 
Every scenario on each map (with some exceptions) consists of $1000$ start-goal position pairs. 
To evaluate a solver on a given scenario, we run it on the first $20$, $40$, $60$, $80$ and $100$ start-goal pairs for all $25$ scenarios and indicate the average performance.

\paragraph{Quantum Hardware Limitations}

\begin{table*}[!t]
	\centering
	\caption{Total path costs for different maps using $100$ agents averaged over $25$ scenarios. We show the mean and the standard deviation and indicate the best performing method in bold an the second best by underlining the result.}
	\newcolumntype{Y}{>{\centering\arraybackslash}X}
	\begin{tabularx}{\textwidth}{c Y Y Y Y}
		\toprule
		Environment & LNS2 & LaCAM* & QP-QUBO & QP-ILP \\
		\midrule
		\rotatebox{0}{\emph{empty-32-32}} & $2164.2 \pm 88.0$ & $\underline{2164.2 \pm 87.6}$ & $2166.5 \pm 88.9$ & $\boldsymbol{2163.4 \pm 88.6}$ \\
		\midrule
		\rotatebox{0}{\emph{random-32-32-10}} & $2239.0 \pm 113.1$ & $2236.8 \pm 112.2$ & $\underline{2231.3 \pm 112.2}$ & $\boldsymbol{2225.4 \pm 111.5}$ \\
		\midrule
		\rotatebox{0}{\emph{room-64-64-8}} & $6301.2 \pm 307.2$ & $\underline{6219.2 \pm 285.4}$ & $6323.4 \pm 342.4$ & $\boldsymbol{6107.1 \pm 303.7}$ \\
		\midrule
		\rotatebox{0}{\emph{maze-32-32-4}} & $\underline{4725.4 \pm 334.6}$ & $\boldsymbol{4534.3 \pm 219.8}$ & $6332.5 \pm 1272.6$ & $5715.9 \pm 1020.6$ \\
		\midrule
		\rotatebox{0}{\emph{den312d}} & $5499.4 \pm 232.6$ & $5492.3 \pm 224.9$ & $\underline{4559.0 \pm 195.2}$ & $\boldsymbol{4558.9 \pm 195.2}$ \\
		\midrule
		\rotatebox{0}{\emph{ost003d}} & $15164.8 \pm 782.8$ & $\underline{15164.2 \pm 782.2}$ & $15166.2 \pm 783.0$ & $\boldsymbol{14901.8 \pm 1452.7}$ \\
		\midrule
		\rotatebox{0}{\emph{den520d}} & $17391.3 \pm 970.8$ & $17391.0 \pm 970.8$ & $\underline{17389.5 \pm 973.5}$ & $\boldsymbol{17378.9 \pm 3541.7}$ \\
		\bottomrule
		\end{tabularx}
	\label{tab:part_results}
\end{table*}

We did not scale up our experiments further in terms of the number of agents (e.g. $|A|=1000$) because the \QUBO size grows rapidly with the number of paths and constraints, making the problem increasingly difficult to solve on current NISQ hardware.
Even with conflict graph decomposition, the resulting QUBOs from larger environments often exceed the qubit capacity and connectivity limitations of existing quantum hardware.

While the D-Wave Advantage System features an impressive $5,670$ physical qubits\footnote{\href{https://www.dwavesys.com/solutions-and-products/systems/}{https://www.dwavesys.com/solutions-and-products/systems}}, its limited qubit connectivity poses significant constraints. To embed arbitrary graphical structures, D-Wave relies on chaining multiple physical qubits into logical \emph{super-qubits}, a process necessitated by the underlying hardware topology. The current Pegasus topology\footnote{\href{https://www.dwavesys.com/media/jwwj5z3z/14-1026a-c\_next-generation-topology-of-dw-quantum-processors.pdf}{https://www.dwavesys.com/media/jwwj5z3z/14-1026a-c\_next-generation-topology-of-dw-quantum-processors.pdf}} supports embeddings of up to 182 densely connected logical qubits, while the upcoming Zephyr\footnote{\href{https://www.dwavesys.com/media/2uznec4s/14-1056a-a\_zephyr\_topology\_of\_d-wave\_quantum\_processors.pdf}{https://www.dwavesys.com/media/2uznec4s/14-1056a-a\_zephyr\_topology\_of\_d-wave\_quantum\_processors.pdf}} topology extends this to 232. Combined with inherent noise limitations, this means that only a fraction of the full qubit count can be effectively utilized, restricting practical problem sizes to a few hundred logical qubits at most.
Since path variables are added in every iteration of our algorithm, considering problems with more than $100$ agents would thus lead to unreasonable results to the aforementioned limitations.
As the quantum hardware matures, more large-scale experiments will be conducted in future work.

\paragraph{Algorithm Performance}
\begin{figure*}[!t]
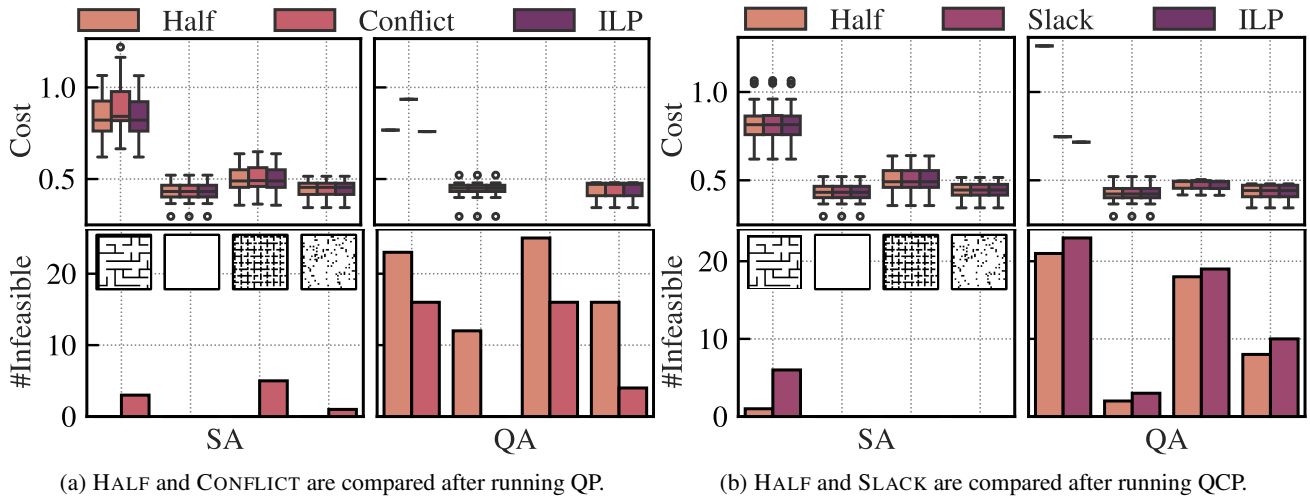

	\centering
	\begin{subfigure}{0.49\textwidth}
		\centering
		\includesvg[width=\columnwidth]{img/qubo/val0.svg}
	\end{subfigure}
	\begin{subfigure}{0.49\textwidth}
		\centering
		\includesvg[width=\columnwidth]{img/qubo/val-1.svg}
	\end{subfigure}
	\begin{subfigure}{0.49\textwidth}
		\centering
		\includesvg[width=\columnwidth]{img/qubo/num0.svg}
		\label{fig:qubo:qp}
	\end{subfigure}
	\begin{subfigure}{0.49\textwidth}
		\centering
		\includesvg[width=\columnwidth]{img/qubo/num-1.svg}
		\label{fig:qubo:qcp}
	\end{subfigure}
	\caption{Performance comparison of different \QUBO formulations for four different maps with $20$-agent problems along with the optimal solution, where we run QP (\cref{fig:qubo:qp}) and QCP (\cref{fig:qubo:qcp}) for $30$ pricing steps. We compare SA and QA by indicating the total path cost of the best sample (top) and the number of infeasible solutions (bottom), i.e.~\cref{eq:ilp_path:vertex_conflict,eq:ilp_path:edge_conflict} are violated. The cost is scaled by $10^{-3}$.}
	\label{fig:qubo}
\end{figure*}
In~\cref{fig:comparison}, we depict the performance of our two algorithms QP and QCP for four different maps of the MovingAI benchmark and varying the number of agents.
All performances are shown relative to the best and worst performing configuration, i.e., $v_{\text{rel}}\defeq (v - v_{\text{best}}) / (v_{\text{worst}} - v_{\text{best}})$ and averaged over all $25$ scenarios.
This maps all obtained values to the range $[0,1]$, where $0$ indicates the best and $1$ the worst performance, respectively.
For QC-\QUBO and QCP-\QUBO we use the \Conflict formulation and the SA solver.

The top plot row shows the mean relative upper bound $\hat v_{\text{rel}}-v_{\text{rel}}(\text{LD})$ on the optimality gap for a different number of agents.
It is not only a measure of solution quality, but it also quantifies the problem size.
The higher this gap is, the more new paths are added to our problem during pricing, since it corresponds to the RHS in our optimality criterion~\cref{eq:optimality}.
We can see that optimally solving the RMP (QCP-ILP and QP-ILP) often has a smaller gap than generating a possibly suboptimal solution by a \QUBO solver.
However, the \QUBO methods largely improve upon the base PPP method.
Even though QP clearly outperforms QCP for optimal solving (ILP) of RMP, QCP takes way less constraints into account and is thus computationally more efficient.
This is also beneficial for the \QUBO solvers, since they can easier generate good solutions for a more well-behaved problem (less constraints).

The mean relative total path cost for the single \MAPF instances are depicted in the middle and bottom plot row.
We compare the baselines PPP, BCP, EECBS, LNS2 and LaCAM* with QP solving the RMP optimally (QP-ILP) and with \QUBO (QP-QUBO).
If a method does not return any solution in the given time window, we set its performance to the worst other performing method, allowing for a naive anytime comparison.
It is evident that QP-ILP is always optimal, while QP-ILP almost always outperforms the best performing baselines LNS2 and LaCAM*, which are better for $100$ agents on \emph{room-64-64-8}.
The (sub)optimal algorithms BCP and EECBS are always outperformed by our methods due to their bad anytime performance. 
It is interesting that for no map they are able to find all optimal solutions in the given time frame.
Since QP-QUBO is able to outperform LNS2 and LaCAM* in many cases, our method is applicable without the need of exactly solving the RMP.

In~\cref{tab:part_results}, we depict the mean and standard deviation of the total path costs averaged over $25$ scenarios for different maps using $100$ agents.
It is evident, that our method with solving the RMP exactly (QP-ILP) is almost always optimal, except for \emph{maze-32-32-4}.
We use PPP for initializing the set of paths which can lead to bad initial results for a large number of agents---especially for small corridors appearing in the maze-like maps.
Adapting the initialization method for obtaining a valid set of conflict-free paths (e.g. using LNS2 or LaCAM* instead) can mitigate this effect.
Further, finding valid solutions to the RMP with our proposed \QUBO formulation (QP-QUBO) is often also outperforming state-of-the-art heuristics (LNS2 and LaCAM*).
However, an increasing agent count makes it harder to solve the resulting high-dimensional QUBO problem, sometimes leading to unsatisfactory solver performance. 
With quantum hardware becoming more mature in the future, we expect the performance of QP-QUBO to approach the one of QP-ILP.

The most time consuming step in our algorithm is to solve the \QUBO problem.
However, a wall-clock time comparison is difficult for quantum devices nowadays, since there is a large communication overhead when using real quantum hardware over cloud services.
Nevertheless, assuming perfect communication with the QA, we can generate solutions to a \QUBO with an annealing time of around $50\,\mu s$. 
Due to its probabilistic nature, we have to generate only few thousand samples. 
In the near future, this could also lead to very good wall-clock time performance. 

\paragraph{QUBO Comparison}

The effect of using different \QUBO formulations for solving the RMP is depicted in~\cref{fig:qubo}.
We consider four different maps (\emph{maze-32-32-4}, \emph{empty-32-32}, \emph{random-32-32-10}, \emph{room-32-32-4}) and compare the results of QP (\cref{fig:qubo:qp}) and QCP (\cref{fig:qubo:qcp}) for $20$ agents.
This leads to varying \QUBO sizes between $20$ and $400$, depending on the underlying map and scenario.
We use SA and QA for solving \Half and \Conflict for QP and \Half and \Slack for QCP.

In the top row, we show the cost of the best sample obtained by SA and QA over all $25$ scenarios, along with the optimal solution (ILP).
Only feasible samples are indicated here, that is only those who adhere the constraints.
For QP, we can see that finding a feasible solution with the \Half \QUBO is nearly almost optimal while the solution quality of \Conflict slightly deteriorates.
Comparing \Half and \Slack for QCP, we find that both solvers are able to find optimal solutions.
However, QA has problems finding feasible solutions for the first map for all \QUBO formulations.

The number of infeasible solutions returned by SA and QA is depicted in the bottom row.
While SA always finds feasible solutions for \Half, it is easier for QA to obtain feasibility with \Conflict.
This is due to the sparsity advantage of \Conflict over \Half and the corresponding decomposition into independent subproblems.
Since the hardware topology of current quantum computers is strongly limited, such properties have a large effect on the solution quality obtained.
For QCP, we find that QA finds slightly more feasible solutions with \Half than \Slack. 
However, we note that only a few separation steps happened and thus only a small number of constraints have been generated.
Using more pricing steps or scaling up the problem size would lead to way more included constraints, making the \Slack \QUBO infeasible to solve.

\section{Conclusion}
\label{sec:conclusion}

In this paper, we presented two novel optimal hybrid quantum-classical \MAPF algorithms.
Extending the classical approach of Branch-and-Price-and-Cut, we circumvent exponentially many branching steps.
Paths and constraints are iteratively added to our problem which is then solved by utilizing different \QUBO formulations, making our algorithms suited for quantum computing.
On an ideal adiabatic quantum processor, the \QUBO subproblems are solved to global optimality. 
We proof a generalization of the classically known criterion telling us that our currently available path set is optimal. 
Experiments indicate good performance of our algorithm compared to state-of-the-art \MAPF algorithms. 
Evaluating different \QUBO formulations indicates the superiority of our approaches over previously presented methods.
With these insights, we conclude that the hardware-aware design of the \QUBO problems can already lead to an advantage on near-term quantum devices.

\section*{Acknowledgments}

This research has been funded by the Federal Ministry of Education
and Research of Germany and the state of North-Rhine Westphalia
as part of the Lamarr Institute for Machine Learning and Artificial
Intelligence.

\section*{Impact Statement}

This paper presents work whose goal is to advance the fields of Optimization and Artificial Intelligence. 
There are many potential societal consequences of our work, none which we feel must be specifically highlighted here.

\bibliography{lit}

\begin{thebibliography}{42}
\providecommand{\natexlab}[1]{#1}
\providecommand{\url}[1]{\texttt{#1}}
\expandafter\ifx\csname urlstyle\endcsname\relax
  \providecommand{\doi}[1]{doi: #1}\else
  \providecommand{\doi}{doi: \begingroup \urlstyle{rm}\Url}\fi

\bibitem[Albash \& Lidar(2018)Albash and Lidar]{albash2018adiabatic}
Albash, T. and Lidar, D.~A.
\newblock Adiabatic quantum computation.
\newblock \emph{Reviews of Modern Physics}, 90\penalty0 (1):\penalty0 015002,
  2018.

\bibitem[Ali et~al.(2024)Ali, Ahmed, Malik, and
  Khalique]{ali2024multicommodity}
Ali, M., Ahmed, H., Malik, M.~H., and Khalique, A.
\newblock Multicommodity information flow through quantum annealer.
\newblock \emph{Quantum Information Processing}, 23\penalty0 (9):\penalty0 313,
  2024.

\bibitem[Choudhury et~al.()Choudhury, Solovey, Kochenderfer, and
  Pavone]{choudhury2022coordinated}
Choudhury, S., Solovey, K., Kochenderfer, M., and Pavone, M.
\newblock Coordinated multi-agent pathfinding for drones and trucks over road
  networks.
\newblock In \emph{Proceedings of the 21th International Conference on
  Autonomous Agents and Multiagent Systems (AAMAS)}.

\bibitem[{D-Wave Systems Inc.}(2021)]{dwave_advantage_update}
{D-Wave Systems Inc.}
\newblock Advantage system performance update.
\newblock Technical Report 14-1054A-A, D-Wave Systems Inc., 2021.
\newblock URL
  \url{https://www.dwavequantum.com/media/kjtlcemb/14-1054a-a_advantage_system_performance_update.pdf}.
\newblock Technical Report.

\bibitem[{D-Wave Systems Inc.}(2025)]{dwave_ocean_sdk}
{D-Wave Systems Inc.}
\newblock D-wave ocean sdk.
\newblock \url{https://docs.ocean.dwavesys.com/}, 2025.
\newblock Version 8.1.0.

\bibitem[Dantzig()]{dantzig2002linear}
Dantzig, G.~B.
\newblock Linear programming.
\newblock \emph{Operations research}, 50\penalty0 (1).

\bibitem[Davies \& Kalidindi()Davies and Kalidindi]{davies2024quantum}
Davies, E. and Kalidindi, P.
\newblock Quantum algorithms for drone mission planning.
\newblock In \emph{Proceedings of Quantum Technologies for Defence and
  Security}.

\bibitem[Desrosiers \& L{\"u}bbecke()Desrosiers and
  L{\"u}bbecke]{desrosiers2011branch}
Desrosiers, J. and L{\"u}bbecke, M.~E.
\newblock Branch-price-and-cut algorithms.
\newblock \emph{Encyclopedia of Operations Research and Management Science}.

\bibitem[Doole et~al.(2020)Doole, Ellerbroek, and
  Hoekstra]{doole2020estimation}
Doole, M., Ellerbroek, J., and Hoekstra, J.
\newblock Estimation of traffic density from drone-based delivery in very low
  level urban airspace.
\newblock \emph{Journal of Air Transport Management}, 88:\penalty0 101862,
  2020.

\bibitem[Farhi et~al.(2014)Farhi, Goldstone, and Gutmann]{farhi2014}
Farhi, E., Goldstone, J., and Gutmann, S.
\newblock A quantum approximate optimization algorithm.
\newblock \emph{arXiv preprint arXiv:1411.4028}, 2014.

\bibitem[Geoffrion(2009)]{geoffrion2009lagrangean}
Geoffrion, A.~M.
\newblock Lagrangean relaxation for integer programming.
\newblock In \emph{Approaches to integer programming}, pp.\  82--114. Springer,
  2009.

\bibitem[Gerlach \& Piatkowski(2024)Gerlach and Piatkowski]{gerlach2024dynamic}
Gerlach, T. and Piatkowski, N.
\newblock Dynamic range reduction via branch-and-bound.
\newblock \emph{arXiv preprint arXiv:2409.10863}, 2024.

\bibitem[Gerlach et~al.(2024)Gerlach, Knipp, Biesner, Emmanouilidis, Hauber,
  and Piatkowski]{gerlach2024fpga}
Gerlach, T., Knipp, S., Biesner, D., Emmanouilidis, S., Hauber, K., and
  Piatkowski, N.
\newblock Fpga-placement via quantum annealing.
\newblock In \emph{Proceedings of the 32nd ACM/SIGDA International Symposium on
  Field Programmable Gate Arrays (ISFPGA)}, pp.\ ~43. ACM, 2024.

\bibitem[Huang et~al.()Huang, Li, Koenig, and Dilkina]{huang2022anytime}
Huang, T., Li, J., Koenig, S., and Dilkina, B.
\newblock Anytime multi-agent path finding via machine learning-guided large
  neighborhood search.
\newblock In \emph{Proceedings of the 36th AAAI Conference on Artificial
  Intelligence (AAAI)}.

\bibitem[Huang et~al.(2022)Huang, Li, Zhao, and Song]{huang2022variational}
Huang, Z., Li, Q., Zhao, J., and Song, M.
\newblock Variational quantum algorithm applied to collision avoidance of
  unmanned aerial vehicles.
\newblock \emph{Entropy}, 24\penalty0 (11):\penalty0 1685, 2022.

\bibitem[Johnson et~al.(2011)Johnson, Amin, Gildert, Lanting, Hamze, Dickson,
  Harris, Berkley, Johansson, Bunyk, Chapple, Enderud, Hilton, Karimi,
  Ladizinsky, Ladizinsky, Oh, Perminov, Rich, Thom, Tolkacheva, Truncik,
  Uchaikin, Wang, Wilson, and Rose]{Johnson2011-QAW}
Johnson, M., Amin, M., Gildert, S., Lanting, T., Hamze, F., Dickson, N.,
  Harris, R., Berkley, A., Johansson, J., Bunyk, P., Chapple, E., Enderud, C.,
  Hilton, J., Karimi, K., Ladizinsky, E., Ladizinsky, N., Oh, T., Perminov, I.,
  Rich, C., Thom, M., Tolkacheva, E., Truncik, C., Uchaikin, S., Wang, J.,
  Wilson, B., and Rose, G.
\newblock {Quantum annealing with manufactured spins}.
\newblock \emph{Nature}, 473\penalty0 (7346), 2011.

\bibitem[Lam et~al.({\natexlab{a}})Lam, Harabor, Stuckey, and Li]{lam2023exact}
Lam, E., Harabor, D.~D., Stuckey, P.~J., and Li, J.
\newblock Exact anytime multi-agent path finding using branch-and-cut-and-price
  and large neighborhood search.
\newblock In \emph{Proceedings of the 33rd International Conference on
  Automated Planning and Scheduling (ICAPS)}, {\natexlab{a}}.

\bibitem[Lam et~al.({\natexlab{b}})Lam, Le~Bodic, Harabor, and
  Stuckey]{lam2019branch}
Lam, E., Le~Bodic, P., Harabor, D.~D., and Stuckey, P.~J.
\newblock Branch-and-cut-and-price for multi-agent pathfinding.
\newblock In \emph{Proceedings of the 28th International Joint Conference on
  Artificial Intelligence (IJCAI)}, {\natexlab{b}}.

\bibitem[Li et~al.({\natexlab{a}})Li, Chen, Harabor, Stuckey, and
  Koenig]{li2021anytime}
Li, J., Chen, Z., Harabor, D., Stuckey, P.~J., and Koenig, S.
\newblock Anytime multi-agent path finding via large neighborhood search.
\newblock In \emph{Proceedings of the 30th International Joint Conference on
  Artificial Intelligence (IJCAI)}, {\natexlab{a}}.

\bibitem[Li et~al.({\natexlab{b}})Li, Chen, Harabor, Stuckey, and
  Koenig]{li2022mapf}
Li, J., Chen, Z., Harabor, D., Stuckey, P.~J., and Koenig, S.
\newblock Mapf-lns2: Fast repairing for multi-agent path finding via large
  neighborhood search.
\newblock In \emph{Proceedings of the 36th AAAI Conference on Artificial
  Intelligence (AAAI)}, {\natexlab{b}}.

\bibitem[Li et~al.({\natexlab{c}})Li, Tinka, Kiesel, Durham, Kumar, and
  Koenig]{li2021lifelong}
Li, J., Tinka, A., Kiesel, S., Durham, J.~W., Kumar, T.~S., and Koenig, S.
\newblock Lifelong multi-agent path finding in large-scale warehouses.
\newblock In \emph{Proceedings of the 35th AAAI Conference on Artificial
  Intelligence (AAAI)}, {\natexlab{c}}.

\bibitem[Li et~al.(2021)Li, Ruml, and Koenig]{li2021eecbs}
Li, J., Ruml, W., and Koenig, S.
\newblock Eecbs: A bounded-suboptimal search for multi-agent path finding.
\newblock In \emph{Proceedings of the 35th AAAI Conference on Artificial
  Intelligence (AAAI)}, pp.\  12353. AAAI Press, 2021.

\bibitem[Li et~al.(2024)Li, Huang, Jiang, Tang, and Song]{li2024quantum}
Li, Q., Huang, Z., Jiang, W., Tang, Z., and Song, M.
\newblock Quantum algorithms using infeasible solution constraints for
  collision-avoidance route planning.
\newblock \emph{IEEE Transactions on Consumer Electronics}, 2024.

\bibitem[L{\"u}bbecke()]{lubbecke2010column}
L{\"u}bbecke, M.~E.
\newblock Column generation.
\newblock \emph{Encyclopedia of Operations Research and Management Science},
  17.

\bibitem[Ma et~al.()Ma, Harabor, Stuckey, Li, and Koenig]{ma2019searching}
Ma, H., Harabor, D., Stuckey, P.~J., Li, J., and Koenig, S.
\newblock Searching with consistent prioritization for multi-agent path
  finding.
\newblock In \emph{Proceedings of the 33rd AAAI Conference on Artificial
  Intelligence (AAAI)}.

\bibitem[Mart{\'\i}n \& Martin()Mart{\'\i}n and Martin]{martin2023unsplittable}
Mart{\'\i}n, M.~P. and Martin, S.
\newblock Unsplittable multi-commodity flow problem via quantum computing.
\newblock In \emph{Proceedings of the 9th International Conference on Control,
  Decision and Information Technologies (CoDIT)}.

\bibitem[M{\"u}cke \& Gerlach(2023)M{\"u}cke and Gerlach]{mucke2023efficient}
M{\"u}cke, S. and Gerlach, T.
\newblock Efficient light source placement using quantum computing.
\newblock In \emph{Proceedings of the Conference on “Lernen, Wissen, Daten,
  Analysen” (LWDA)}. CEUR-WS.org, 2023.

\bibitem[M{\"u}cke et~al.()M{\"u}cke, Gerlach, and
  Piatkowski]{mucke2025optimum}
M{\"u}cke, S., Gerlach, T., and Piatkowski, N.
\newblock Optimum-preserving qubo parameter compression.
\newblock \emph{Quantum Machine Intelligence}, 7\penalty0 (1).

\bibitem[Okumura(2023{\natexlab{a}})]{okumura2023improving}
Okumura, K.
\newblock Improving lacam for scalable eventually optimal multi-agent
  pathfinding.
\newblock In \emph{Proceedings of the 32nd International Joint Conference on
  Artificial Intelligence (IJCAI)}, pp.\  243, 2023{\natexlab{a}}.

\bibitem[Okumura(2023{\natexlab{b}})]{okumura2023lacam}
Okumura, K.
\newblock Lacam: Search-based algorithm for quick multi-agent pathfinding.
\newblock In \emph{Proceedings of the 37th AAAI Conference on Artificial
  Intelligence (AAAI)}, pp.\  11655. AAAI Press, 2023{\natexlab{b}}.

\bibitem[Preskill(2018)]{preskill2018quantum}
Preskill, J.
\newblock Quantum computing in the nisq era and beyond.
\newblock \emph{Quantum}, 2:\penalty0 79, 2018.

\bibitem[Punnen(2022)]{punnen2022quadratic}
Punnen, A.~P.
\newblock \emph{The quadratic unconstrained binary optimization problem}.
\newblock Springer, 2022.

\bibitem[R{\"o}nnberg \& Larsson()R{\"o}nnberg and
  Larsson]{ronnberg2019integer}
R{\"o}nnberg, E. and Larsson, T.
\newblock An integer optimality condition for column generation on zero--one
  linear programs.
\newblock \emph{Discrete Optimization}, 31.

\bibitem[Sadykov et~al.()Sadykov, Vanderbeck, Pessoa, Tahiri, and
  Uchoa]{sadykov2019primal}
Sadykov, R., Vanderbeck, F., Pessoa, A., Tahiri, I., and Uchoa, E.
\newblock Primal heuristics for branch and price: The assets of diving methods.
\newblock \emph{INFORMS Journal on Computing}, 31\penalty0 (2).

\bibitem[Sharon et~al.()Sharon, Stern, Felner, and
  Sturtevant]{sharon2012conflict}
Sharon, G., Stern, R., Felner, A., and Sturtevant, N.
\newblock Conflict-based search for optimal multi-agent path finding.
\newblock In \emph{Proceedings of the 26th AAAI Conference on Artificial
  Intelligence (AAAI)}.

\bibitem[Stern et~al.()Stern, Sturtevant, Felner, Koenig, Ma, Walker, Li,
  Atzmon, Cohen, Kumar, et~al.]{stern2019multi}
Stern, R., Sturtevant, N., Felner, A., Koenig, S., Ma, H., Walker, T., Li, J.,
  Atzmon, D., Cohen, L., Kumar, T., et~al.
\newblock Multi-agent pathfinding: Definitions, variants, and benchmarks.
\newblock In \emph{Proceedings of the 12th International Symposium on
  Combinatorial Search (SoCS)}.

\bibitem[Stollenwerk et~al.({\natexlab{a}})Stollenwerk, Lobe, and
  Jung]{stollenwerk2019flight}
Stollenwerk, T., Lobe, E., and Jung, M.
\newblock Flight gate assignment with a quantum annealer.
\newblock In \emph{International Workshop on Quantum Technology and
  Optimization Problems}, {\natexlab{a}}.

\bibitem[Stollenwerk et~al.({\natexlab{b}})Stollenwerk, O’Gorman, Venturelli,
  Mandra, Rodionova, Ng, Sridhar, Rieffel, and Biswas]{stollenwerk2019quantum}
Stollenwerk, T., O’Gorman, B., Venturelli, D., Mandra, S., Rodionova, O., Ng,
  H., Sridhar, B., Rieffel, E.~G., and Biswas, R.
\newblock Quantum annealing applied to de-conflicting optimal trajectories for
  air traffic management.
\newblock \emph{IEEE Transactions on Intelligent Transportation Systems},
  21\penalty0 (1), {\natexlab{b}}.

\bibitem[Tarquini et~al.(2024)Tarquini, Dragoni, Vandelli, and
  Tudisco]{tarquini2024testing}
Tarquini, S., Dragoni, D., Vandelli, M., and Tudisco, F.
\newblock Testing quantum and simulated annealers on the drone delivery packing
  problem.
\newblock \emph{arXiv preprint arXiv:2406.08430}, 2024.

\bibitem[Wolsey(2020)]{wolsey2020integer}
Wolsey, L.~A.
\newblock \emph{Integer programming}.
\newblock John Wiley \& Sons, 2020.

\bibitem[Yen()]{yen1971finding}
Yen, J.~Y.
\newblock Finding the k shortest loopless paths in a network.
\newblock \emph{Management Science}, 17\penalty0 (11).

\bibitem[Zhang et~al.(2021)Zhang, Zhang, and Potter]{zhang2021qed}
Zhang, Y., Zhang, R., and Potter, A.~C.
\newblock Qed driven qaoa for network-flow optimization.
\newblock \emph{Quantum}, 5:\penalty0 510, 2021.

\end{thebibliography}
\bibliographystyle{icml2025}

\end{document}